\documentclass[letterpaper, 10 pt, conference]{ieeeconf}  

\IEEEoverridecommandlockouts                              

\usepackage{cite}
\usepackage{amsmath,amssymb,amsfonts}
\usepackage{algorithmic}
\usepackage{graphicx}
\usepackage{textcomp}
\usepackage{xcolor}
\usepackage{amsmath,amssymb,amsfonts}
\usepackage{mathtools}
\usepackage{graphicx}
\usepackage{booktabs}
\usepackage{multirow}
\usepackage{microtype}
\usepackage{cite}
\usepackage{url}
\usepackage{xcolor}
\usepackage{tikz}
\usetikzlibrary{arrows.meta,positioning,fit}
\usepackage{algorithm}
\usepackage{algorithmic}
\usepackage{graphicx}
\usepackage{subcaption}
\usepackage{xcolor}

\newcommand{{\method}}{\textsc{RefinePPO}}
\newcommand{\rar}{\textsc{IAR}}
\newcommand{\E}{\mathbb{E}}
\newcommand{\R}{\mathbb{R}}

\newtheorem{proposition}{Proposition}

\def\BibTeX{{\rm B\kern-.05em{\sc i\kern-.025em b}\kern-.08em
    T\kern-.1667em\lower.7ex\hbox{E}\kern-.125emX}}
\begin{document}

\title{\LARGE \bf
{\method}: Learning Continuous Control Policies by\\
Iterative Action Refinement
}


 \author{
 Sachini Weerasekara\textsuperscript{1},
 Sagar Kamarthi\textsuperscript{1},
 Jacqueline Isaacs\textsuperscript{1}
 \thanks{\textsuperscript{1}Northeastern University,
 Boston, MA, USA.
 \{weerasekara.s, s.kamarthi, j.isaacs\}@northeastern.edu}
 }

\maketitle

\begin{abstract}
Deep reinforcement learning (DRL) has achieved strong performance across a wide range of continuous-control problems. These continuous-control policies, however, are often defined as direct mappings from an observed state to an action or action distribution, requiring a single feed-forward network to construct an optimal control decision in one pass. While effective, this formulation leaves little opportunity for the policy to reconsider or progressively improve an action once an initial prediction has been formed. In this work, we explore an alternative approach: rather than learning only to directly predict an action, can a policy learn to iteratively improve one, and can this iterative process provide advantages during policy learning? We introduce \emph{Iterative Action Refinement} (\rar), an iterative action-construction method that constructs control actions through a sequence of learned residual corrections. Starting from an initial proposal, a shared refinement network repeatedly conditions on the observed state and the current action proposal, allowing each refinement step to revise the action constructed by preceding steps. The final refined proposal is then used to determine the action executed by the agent. We integrate this iterative action-construction mechanism with Proximal Policy Optimization (PPO), yielding {\method}. We evaluate {\method} across 14 benchmark control tasks, complemented by controlled ablations of refinement depth and update schedules and analyses aimed at understanding why iterative refinement is effective. Across these environments, {\method} matches or exceeds the performance of standard PPO while demonstrating faster convergence on several tasks.
\end{abstract}

\section{Introduction}

Deep reinforcement learning has enabled solving increasingly complex continuous-control problems in domains such as robotics~\cite{lillicontinuous,levine2016end}, and autonomous driving~\cite{kiran2021deep, 10411826}. However, much of this progress has focused on improving how policies are optimized: developing more stable objectives~\cite{schulman2017proximal}, improving exploration~\cite{haarnoja2018soft}, reducing variance~\cite{schulman2015high}, better generalization~\cite{jayawardana2025intersectionzoo} and making better use of collected experience~\cite{schaul2015prioritized}. Comparatively less attention has been given to a more basic question about the computation performed by the policy itself: \emph{how should a neural policy construct an action from the current state?}

The dominant approach is straightforward. Given an observation, a neural network predicts an optimal action or the parameters of an optimal action distribution in a single forward pass, which is then sent to the agent for execution. Many successful continuous-control methods share this design, which offers important practical advantages: it is simple, intuitive, and has already shown strong success across domains. Proximal Policy Optimization (PPO)~\cite{schulman2017proximal}, for example, commonly represents a continuous policy as a Gaussian whose mean is predicted directly from the current state using a feedforward network, and the action is then sampled from the resulting distribution~\cite{huang2022cleanrl}.

An alternative is to treat action generation not as a one-shot prediction, but as an iterative process in which an initial action proposal is progressively improved before execution. Rather than asking a network to produce the optimal action at once, the policy can learn a correction mechanism that repeatedly revises its current proposal based on both the observed state and what has already been constructed.

This observation motivates the question we study in this work: \emph{can continuous-control policies benefit from learning how to improve an action proposal rather than only learning how to predict the optimal action directly?}. Instead of requiring the policy to resolve the optimal control decision in one computation, we allow the policy to construct the optimal control policy progressively. Each computation begins with the current proposal, determines how that proposal should be changed given the state, and passes the revised proposal to the next computation. Such a process gives later computations access to what earlier computations have already constructed and turns action generation from a one-shot prediction into a learned sequence of corrections. 

We introduce \emph{Iterative Action Refinement} (\rar), an iterative action-construction method based on this idea. Starting from an initial action proposal, {\rar} repeatedly applies a shared residual network that conditions on both the observed state and the current proposal. Given state $s$ and proposal $m_k$, the network $f_\theta$ predicts a correction $f_\theta(s,m_k)$ and updates the proposal according to
\begin{equation}
m_{k+1}=m_k+c_{k+1}f_\theta(s,m_k), \qquad m_0=0
\label{eq:refine}
\end{equation}
for $K$ refinement steps. Each intermediate $m_k$ is therefore a latent action proposal rather than an action executed in the environment. After the refinement process is complete, we construct the stochastic policy,
\begin{equation}
a\sim\mathcal{N}\left(m_K,\operatorname{diag}(\sigma^2)\right)
\end{equation}

\noindent where $\mathcal{N}$ denotes a Gaussian distribution and $\sigma$ denotes its vector of action-wise standard deviations. The final refined proposal $m_K$ serves as the Gaussian policy mean, while exploration retains the standard stochastic form used by the underlying continuous-control policy.

This alternative formulation provides several potential advantages for continuous-control policies. First, iterative refinement decomposes the state-to-action mapping into a sequence of conditional corrections, allowing the policy to progressively construct a control decision rather than requiring a single computation to produce the optimal action. Second, because each refinement step conditions on the current action proposal, later computations can explicitly account for decisions made by earlier steps and adjust them in the context of the full action being constructed. This self-conditioning may be particularly useful in high-dimensional control problems, where effective behavior requires coordination across multiple action dimensions. Third, the refinement network is shared across steps, allowing the policy to perform additional computation without introducing a separate set of parameters for each refinement stage. The refinement depth $K$ therefore provides a direct mechanism for trading additional policy computation for progressively deeper action refinement. Finally, weight sharing encourages the policy to learn a reusable correction rule that is applied across different intermediate action proposals, rather than associating each stage of computation with a separate transformation. This may promote a more structured and generalizable action-construction process, while potentially making useful control behaviors easier to learn and improving learning efficiency.

We integrate {\rar} with PPO, yielding {\method}. The integration is deliberately minimal: refinement changes how the policy mean is computed, but does not alter the surrounding policy-gradient algorithm. The PPO clipped surrogate objective, probability ratio, critic, advantage estimator, and entropy formulation remain unchanged. During policy evaluation, the actor performs $K$ deterministic refinement steps before sampling the environment action; during optimization, gradients propagate through the complete refinement chain. Consequently, {{\method}} introduces iterative computation \emph{within} each policy evaluation while PPO continues to perform policy optimization \emph{across} collected experience.

We evaluate {{\method}} on 14 continuous control tasks from classic control, Box2D and MuJoCo in Gymnasium complemented by controlled ablations of refinement depth, update schedules and analyses aimed at understanding why iterative refinement is effective. We compare against standard PPO~\cite{schulman2017proximal}. Across the environments, {\method} achieves performance that is competitive with or exceeds the standard PPO baseline and exhibits faster convergence on several tasks. These results indicate that iterative action construction can provide a useful alternative to conventional one-pass action prediction, while also showing that the benefit depends on the choice of refinement depth and update dynamics.

To summarize, our contributions are threefold. First, we introduce Iterative Action Refinement (\rar), a simple iterative action-construction mechanism in which a shared residual network progressively refines an action proposal. Second, we characterize the refinement process within PPO as {{\method}} without modifying the underlying policy-gradient objective. Third, we provide an empirical evaluation across 14 benchmark continuous control tasks together with controlled ablations of refinement depth and update schedules and analyses aimed at understanding why iterative refinement is effective. Taken together, these results motivate iterative action refinement as a simple but distinct computational bias for continuous-control policies: rather than requiring a policy to construct its final action in a single pass, the actor can learn a reusable process for progressively improving its own action proposals.

\textbf{Remark}: While we focus on PPO in this work, {\rar} is not inherently tied to PPO. Extending {\rar} to other methods is a natural direction for future work.

\section{Related Work}

In this section, we review prior work most closely related to \rar{}, including policy optimization for continuous control, residual reinforcement learning, iterative computation and learned optimization, and planning and optimization within policies.

\subsection{Policy Optimization for Continuous Control}
Policy-gradient methods optimize expected return by directly differentiating a parameterized stochastic policy~\cite{sutton1999policy}. Trust Region Policy Optimization (TRPO) stabilizes this process by constraining policy updates through a KL-divergence trust region~\cite{schulman2015trust}, while Proximal Policy Optimization (PPO) replaces the constrained optimization with a clipped surrogate objective that limits excessively large policy updates~\cite{schulman2017proximal}. PPO has consequently become a widely used baseline for continuous-control and robotic locomotion tasks. Our approach leaves the PPO objective and optimization procedure unchanged and instead modifies how the policy constructs the mean of its action distribution.

Other continuous-control methods, including DDPG~\cite{lillicrap2015continuous}, TD3~\cite{fujimoto2018addressing}, and SAC~\cite{haarnoja2018soft}, differ in their optimization objectives and exploration mechanisms but similarly rely on neural actors that directly construct actions or action-distribution parameters from the current state. In this work, we focus exclusively on integrating {\rar} with PPO in order to study the effect of iterative action refinement within a controlled policy-optimization setting. Investigating how {\rar} can be integrated with other policy-gradient and actor--critic methods, and whether its benefits extend across different optimization frameworks, remains an important direction for future work.

\subsection{Residual Reinforcement Learning}
Residual learning has also been explored directly in reinforcement learning, particularly through residual policies that learn corrections to an existing controller or policy. In residual reinforcement learning, the learned policy typically produces an additive correction to an action supplied by a fixed controller, allowing prior control knowledge to be combined with learned behavior~\cite{johannink2019residual, silver2018residual,weerasekara2026prototype,weerasekara2025cellclique} and later work exploring more additions such as multi residual task learning~\cite{jayawardana2024generalizing} and mixture of experts in the context of residual reinforcement learning~\cite{jayawardana2025multi}. Although {\rar} also uses additive residual corrections, the role of the residual is fundamentally different. {\rar} does not correct the output of an external controller. Instead, a single learned policy repeatedly corrects its own intermediate action proposal within one decision step. The residual structure therefore operates \emph{inside} the policy's action-construction process rather than between a learned policy and a pre-existing controller.

\subsection{Iterative Computation and Learned Optimization}

The iterative structure of {\rar} is related to methods that construct predictions through repeated updates. Residual networks build representations through sequences of incremental transformations~\cite{he2016deep}, while neural ODEs connect such residual updates to continuous-time dynamics~\cite{chen2018neural}. Learned optimizers predict updates to candidate solutions rather than directly producing final solutions~\cite{andrychowicz2016learning}, and iterative amortized inference similarly improves an initial estimate through successive learned corrections~\cite{marino2018iterative}. More recently, looped Transformers use weight-tied iterations to progressively refine representations~\cite{giannou2023looped}. These approaches motivate the broader principle underlying {\rar}: a difficult prediction can be represented as a learned refinement process rather than a single direct mapping.

Deep equilibrium models (DEQs) extend repeated weight-tied computation by defining representations through fixed points of learned transformations~\cite{bai2019deep,weerasekara2025improvements}. {\rar} instead performs a finite, explicit number of refinement steps and backpropagates through the resulting computation normally. Nevertheless, the fixed-point perspective provides a useful interpretation of repeated action refinement and, under appropriate contraction conditions, of how successive proposals approach a solution.

\subsection{Planning and Optimization Within Policies}

Iterative action construction is also related to approaches that perform planning or optimization as part of decision-making. Model-predictive control and model-based reinforcement learning methods can optimize candidate action sequences using a model of the system dynamics~\cite{chen2018neural}, while differentiable planning methods embed structured planning computations within trainable neural architectures~\cite{tamar2016value,weerasekara2022trends, weerasekara2024reinforcement}. \rar{} differs from these approaches in that it requires neither a dynamics model nor an explicit planning or action-space optimization procedure at execution time. Instead, the refinement rule is learned end-to-end through the policy objective, and inference consists of repeatedly applying the learned residual network to its current action proposal. In this sense, \rar{} can be viewed as amortized iterative computation in action space: policy learning acquires a reusable update rule for constructing actions, rather than solving a new model-based planning or optimization problem at every decision step.

\section{Preliminaries}

We consider an infinite-horizon discounted Markov decision process (MDP) $\mathcal{M}=(\mathcal{S},\mathcal{A},P,r,\gamma)$, where $\mathcal{S}$ and $\mathcal{A}\subseteq\R^{d_a}$ denote the state and continuous action spaces, respectively, $d_a$ is the action dimension, $P(s'\mid s,a)$ is the transition kernel, $r(s,a)$ is the reward function, and $\gamma\in[0,1)$ is the discount factor. A stochastic policy $\pi_\theta(a\mid s)$, parameterized by $\theta$, induces a trajectory $\tau=(s_0,a_0,s_1,a_1,\ldots)$ and is optimized to maximize the expected discounted return
\begin{equation}
J(\theta)=\E_{\tau\sim\pi_\theta}\left[\sum_{t=0}^{\infty}\gamma^t r(s_t,a_t)\right]
\end{equation}

For continuous actions, a standard PPO actor parameterizes the policy as a diagonal Gaussian,
\begin{equation}
\pi_\theta(a\mid s)=\mathcal{N}\left(a;\mu_\theta(s),\operatorname{diag}(\sigma_\theta^2)\right) 
\label{eq:gaussian}
\end{equation}
where $\mu_\theta(s)\in\R^{d_a}$ is the state-dependent mean produced by the actor network and $\sigma_\theta\in\R_{>0}^{d_a}$ is the vector of action-wise standard deviations. In the standard PPO parameterization considered here, $\mu_\theta(s)$ is computed in a single forward pass, while $\log\sigma_\theta$ is a learned vector.

For PPO, let $\theta_{\mathrm{old}}$ denote the behavior-policy parameters and $\hat A_t$ a generalized advantage estimate~\cite{schulman2015high}. The likelihood ratio is
\begin{equation}
r_t(\theta)=
\frac{\pi_\theta(a_t\mid s_t)}
{\pi_{\theta_{\mathrm{old}}}(a_t\mid s_t)}
\end{equation}
and the clipped actor objective is
\begin{equation}
L^{\mathrm{CLIP}}(\theta)=
\E_t\left[
\min\left(
r_t(\theta)\hat A_t,
\operatorname{clip}(r_t(\theta),1-\epsilon,1+\epsilon)\hat A_t
\right)
\right]
\label{eq:ppo}
\end{equation}
where $\epsilon>0$ denotes the PPO clipping parameter.

\section{Problem Formulation}
\label{sec:problem}

A standard continuous-control actor constructs the policy mean through a direct mapping
\begin{equation}
\mu_\theta:\mathcal{S}\rightarrow\R^{d_a}
\end{equation}
such that the complete action proposal is produced from the current state in a single computation. We consider a more general formulation in which action construction is itself a sequential computation.

For a fixed state $s\in\mathcal{S}$, let $m_k\in\R^{d_a}$ denote an intermediate action proposal and consider a state-conditioned transition operator
\begin{equation}
m_{k+1}=G_\theta(s,m_k),
\qquad k=0,\ldots,K-1
\label{eq:generic_dynamics}
\end{equation}
Starting from an initial proposal $m_0$, repeated application of $G_\theta$ generates an internal trajectory in action space,
\begin{equation}
m_0\rightarrow m_1\rightarrow\cdots\rightarrow m_K
\end{equation}
The environment state $s$ remains fixed throughout this internal computation, while the action proposal evolves. Only the terminal proposal determines the policy mean,
\begin{equation}
\mu_{\theta,K}(s)\triangleq m_K(s)
\end{equation}
whereas the intermediate proposals are latent computations and are never executed in the environment.

This viewpoint casts action construction as a finite-horizon dynamical system in action space. The problem is then to learn dynamics that transform an initial proposal into a useful policy mean through successive state-conditioned updates. Equation~\eqref{eq:generic_dynamics} deliberately leaves the form of these dynamics unspecified. We seek a simple realization that allows later computations to revise earlier action proposals, reuses the same learned transformation across steps, and can be incorporated into a stochastic continuous-control policy without changing its underlying optimization objective.

\section{Method: Iterative Action Refinement}
\label{sec:method}

\begin{figure*}
    \centering
    \includegraphics[width=1\linewidth]{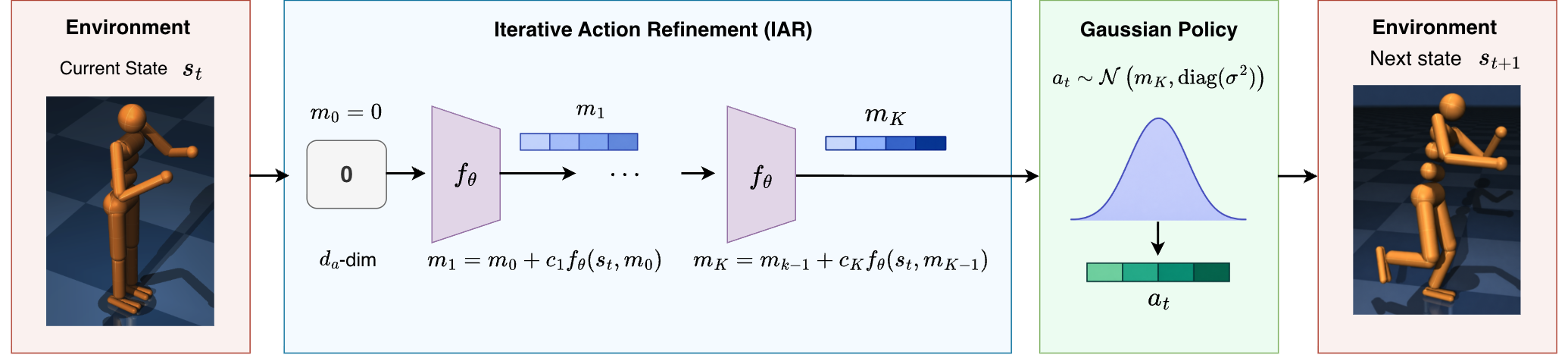}
    \caption{Overview of {\method}. Given the current environment state $s_t$, the policy initializes an action proposal $m_0=\mathbf{0}$ and progressively refines it for $K$ steps using the state-conditioned residual network $f_\theta$. The state $s_t$ remains fixed throughout refinement, while each update conditions on the current action proposal $m_k$. The final proposal $m_K$ defines the mean of the Gaussian policy, from which a single action $a_t$ is sampled and executed in the environment, producing the next state $s_{t+1}$. Intermediate proposals are latent computations and are never executed in the environment.}
    \label{fig:placeholder}
\end{figure*}

\subsection{Iterative Refinement Dynamics}

We instantiate the action-space dynamics in Eq.~\eqref{eq:generic_dynamics} using \emph{Iterative Action Refinement} (\rar). Given a state $s$ and current proposal $m_k$, a shared refinement network $f_\theta$ predicts a correction in action space. Starting from the zero vector $m_0=\mathbf{0}\in\R^{d_a}$, the proposal evolves according to
\begin{equation}
m_{k+1}
=
m_k+c_{k+1}f_\theta(s,m_k),
\qquad k=0,\ldots,K-1
\label{eq:refine}
\end{equation}
where refinement scheduler $c_{k+1}>0$ controls the magnitude of the corresponding update. The same parameters $\theta$ are shared across all refinement steps.

After $K$ refinements, the terminal proposal defines the mean of the stochastic policy,
\begin{equation}
\pi_{\theta,K}(a\mid s)
=
\mathcal{N}\left(
a;m_K(s),\operatorname{diag}(\sigma_\theta^2)
\right)
\label{eq:rarpolicy}
\end{equation}
Thus, the refinement trajectory is deterministic conditioned on $s$ and the policy parameters, and stochasticity is introduced only after refinement through the final Gaussian policy.

Under this parameterization, the abstract transition operator in Eq.~\eqref{eq:generic_dynamics} takes the form
\begin{equation}
G_{\theta,k}(s,m)
=
m+c_{k+1}f_\theta(s,m)
\end{equation}
Because each correction depends on the current proposal, later refinements explicitly condition on what earlier refinements have already constructed.

\begin{proposition}[One-step refinement]
For $K=1$, $c_1=1$, and $m_0=\mathbf{0}\in\R^{d_a}$, the {\rar} policy mean satisfies
$ \mu_{\theta,1}(s)=f_\theta(s,\mathbf{0})$.
Since the proposal input is fixed, $\mu_{\theta,1}$ is functionally a direct mapping from state to action mean.
\end{proposition}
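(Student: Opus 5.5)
The argument is a direct unrolling of the refinement recursion in Eq.~\eqref{eq:refine}, so the plan has only two steps.

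\textbf{Step 1: unroll the recursion.} Instantiate the recursion with $K=1$, so only the index $k=0$ occurs. With $m_0=\mathbf{0}$ and $c_1=1$ this gives
\begin{equation*}
m_1 = m_0 + c_1 f_\theta(s,m_0) = \mathbf{0} + f_\theta(s,\mathbf{0}) = f_\theta(s,\mathbf{0}).
\end{equation*}
By definition the policy mean is the terminal proposal, $\mu_{\theta,K}(s)\triangleq m_K(s)$, and the same mean appears in Eq.~\eqref{eq:rarpolicy}. Taking $K=1$ yields $\mu_{\theta,1}(s)=f_\theta(s,\mathbf{0})$, which is the stated identity.

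\textbf{Step 2: the ``direct mapping'' claim.} The second input to $f_\theta$ is the constant $\mathbf{0}$, which does not depend on $s$ or on any earlier computation. Define $g_\theta:\mathcal{S}\to\R^{d_a}$ by $g_\theta(s)\triangleq f_\theta(s,\mathbf{0})$. Then $\mu_{\theta,1}=g_\theta$ is a single-pass map from state to action mean, of the same type as the standard actor $\mu_\theta:\mathcal{S}\to\R^{d_a}$ in Section~\ref{sec:problem}. It follows that the $K=1$ policy $\pi_{\theta,1}(a\mid s)$ has exactly the form of the standard Gaussian parameterization in Eq.~\eqref{eq:gaussian}, with mean $g_\theta(s)$.

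There is no real obstacle. The only point needing care is interpretive: ``functionally a direct mapping'' should be read as saying that the composed map $s\mapsto f_\theta(s,\mathbf{0})$ involves no self-conditioning on a previously constructed proposal. It should not be read as claiming that the function class of $g_\theta$ equals that of a given standard actor network. The architectures can differ, for instance because $f_\theta$ carries extra input weights multiplying the zero proposal, but these weights contribute nothing to the output.
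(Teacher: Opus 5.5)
Your proposal is correct and matches the paper's proof: a single application of the refinement recursion with $m_0=\mathbf{0}$ and $c_1=1$ gives $m_1=f_\theta(s,\mathbf{0})$, and identifying the terminal proposal with the policy mean gives $\mu_{\theta,1}(s)=f_\theta(s,\mathbf{0})$, which depends only on $s$. Your remark about the zero-proposal input weights contributing nothing holds for an affine first layer on the concatenated input, but the paper's argument does not need it.
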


\noindent \textit{Proof:} Applying Eq.~\eqref{eq:refine} once gives $m_1=f_\theta(s,\mathbf{0})$
Since the terminal proposal defines the policy mean, $\mu_{\theta,1}(s)=m_1=f_\theta(s,\mathbf{0})$, which depends only on $s$.

Thus, $K=1$ contains no iterative refinement, while $K>1$ allows each correction to condition on proposals generated by previous applications of the same network.

\subsection{Integration with PPO}

Integrating \rar{} with PPO requires only replacing the direct computation of the Gaussian mean with the refinement process. During rollout, the actor computes $m_K$ and samples a single action from Eq.~\eqref{eq:rarpolicy}. During a PPO update, $m_K$ is recomputed under the current parameters and the stored action is evaluated under the resulting policy. The likelihood ratio is therefore
\begin{equation}
r_t(\theta)
=
\frac{\pi_{\theta,K}(a_t\mid s_t)}
{\pi_{\theta_{\mathrm{old}},K}(a_t\mid s_t)}
\end{equation}
and is used directly in the standard PPO objective of Eq.~\eqref{eq:ppo}. The clipped objective, value loss, advantage estimator, and entropy term are otherwise unchanged. The full algorithm is given in Algorithm~\ref{alg:refineppo}.

\begin{algorithm}[t]
\caption{{\method}}
\label{alg:refineppo}
\begin{algorithmic}[1]
\REQUIRE State $s$, refinement depth $K$, coefficients $\{c_k\}_{k=1}^{K}$, refinement network $f_\theta$, log standard deviation $\log\sigma_\theta$
\STATE $m \leftarrow \mathbf{0}$
\FOR{$k=1,\ldots,K$}
    \STATE $m \leftarrow m + c_k f_\theta(s,m)$
\ENDFOR
\STATE $\pi_{\theta,K}(\cdot\mid s)
\leftarrow
\mathcal{N}\!\left(m,\operatorname{diag}(\sigma_\theta^2)\right)$
\IF{training}
    \STATE Sample $a \sim \pi_{\theta,K}(\cdot\mid s)$
    \STATE Update $\theta$ using the standard PPO objective
\ELSE
    \STATE Set $a \leftarrow m$ \COMMENT{deterministic evaluation}
\ENDIF
\end{algorithmic}
\end{algorithm}

\section{Why Iterative Action Refinement Can Work}
\label{sec:understanding}

In this section, we use a simple toy task to illustrate one potential utility of {\rar} and provide intuition for why it can be effective. Specifically, to demonstrate the benefits of iterative refinement, consider a two-dimensional action $a=(a_1,a_2)$ whose optimal components satisfy
\begin{equation}
a_1^\star = \phi(s),
\qquad
a_2^\star = \psi(s,a_1^\star)
\label{eq:toy_action}
\end{equation}
Here, the appropriate value of $a_2$ depends on the decision made for $a_1$, representing a simple form of coordination between action dimensions. Such dependencies arise naturally in real-world tasks: in robotic manipulation, the appropriate gripper orientation may depend on the chosen end-effector position; in autonomous driving, the appropriate steering angle may depend on the vehicle's speed; and in locomotion, the placement of one foot may depend on the position and motion of the other limbs.

A conventional PPO policy predicts both components directly from the state,
\begin{equation}
\mu_\theta(s)
=
\begin{bmatrix}
\mu_{\theta,1}(s) \\
\mu_{\theta,2}(s)
\end{bmatrix}
\approx
\begin{bmatrix}
\phi(s) \\
\psi(s,\phi(s))
\end{bmatrix}
\label{eq:toy_ppo}
\end{equation}
Thus, although a sufficiently expressive network can represent the optimal policy, the dependency between $a_1$ and $a_2$ must be learned implicitly within a single state-to-action computation, which may require more samples to reliably capture the coordination between action dimensions.

In contrast, {\rar} can resolve this dependency progressively across refinement steps. Recall that action generation always begins from the fixed initialization $m_0=\mathbf{0}$. For $K=2$, the first refinement receives $(s,\mathbf{0})$ and can construct an intermediate action estimate such as
\begin{equation}
m_1
=
m_0 + f_\theta(s,m_0)
=
f_\theta(s,\mathbf{0})
\approx
\begin{bmatrix}
\phi(s) \\
0
\end{bmatrix}
\label{eq:toy_refinement_1}
\end{equation}
The second refinement then receives $(s,m_1)$. Unlike the first step, it therefore has explicit access to the intermediate decision $m_{1,1}\approx \phi(s)$ and can use this information when refining the action:
\begin{equation}
m_2
=
m_1 + f_\theta(s,m_1)
\approx
\begin{bmatrix}
\phi(s) \\
\psi(s,m_{1,1})
\end{bmatrix}
\approx
\begin{bmatrix}
\phi(s) \\
\psi(s,\phi(s))
\end{bmatrix}
\label{eq:toy_refinement_2}
\end{equation}
Thus, rather than requiring the composition $\psi(s,\phi(s))$ to be produced entirely within a single state-to-action computation, refinement provides a computational path in which $\phi(s)$ can first be represented in the intermediate action and then made directly available when constructing the dependent component.

Importantly, this example does not imply that {\rar} is constrained to refine one action dimension at a time, nor that a conventional PPO policy cannot represent the same mapping. Both policies may be sufficiently expressive to represent the optimal action. The distinction is instead in the structure of the computation. Later in experimental results, we will provide evidence that these behaviors are likely happening in {\rar}. 

\section{Experimental Setup}

\subsection{Benchmarks and Baselines}

We perform experiments on 14 continuous control tasks from classic control, Box2D and MuJoCo environments in Gymnasium. The primary baseline is the standard PPO actor~\cite{schulman2017proximal}. 

\subsection{Implementation Details}

We use CleanRL~\cite{huang2022cleanrl} implementation of continuous action PPO and further modify it to implement {\method}. The baseline PPO implementation is same as the CleanRL implementation. Unless otherwise noted, training uses 6 million environment steps for each envionrment. We independently performed hyperparameter sweep for both baseline PPO and {{\method}}. As fixed hyperparameters, the implementation uses orthogonal initialization with $\tanh$ nonlinearities. The critic contains two 64-unit hidden layers. The baseline actor contains two 128-unit hidden layers, while the refinement field uses two 128-unit hidden layers after concatenating the normalized observation and current action proposal. All results are reported as average over 5 independent seeds. 

\subsection{Evaluation Protocol}

At fixed environment-step intervals during training, we evaluate the deterministic policy using $a=m_K(s)$ for {{\method}} and $a=m(s)$ for PPO, both without exploration noise. For each training seed, we select the checkpoint achieving the highest evaluation return during training and use this checkpoint for final evaluation. The selected checkpoint is then evaluated over a ten evaluation episodes, and the resulting returns are averaged to obtain the final performance for that seed. We then average these per-seed final returns across all training seeds and report the mean and standard deviation across seeds.

\subsection{Refinement Depth}

The refinement depth $K$ determines the number of residual corrections applied before constructing the final action distribution in {{\method}}. We consider $K\in\{1,2,4,8\}$ to evaluate the effect of increasing refinement depth on policy performance. Here, $K=1$ represents the one-step boundary case with no iterative refinement, while larger values of $K$ provide progressively more refinement steps.

\subsection{Refinement Schedules}

The coefficients $\{c_k\}_{k=1}^{K}$ in Eq.~\eqref{eq:refine} control the contribution of each residual update. We consider three schedules:
\begin{equation}
\text{Uniform: } c_k=1,\quad
\text{Inverse: } c_k=\frac{1}{k},\quad
\text{Average: } c_k=\frac{1}{K}
\end{equation}
The uniform schedule applies every correction at full scale, whereas the inverse schedule progressively reduces later updates. For the average schedule, $\sum_{k=1}^{K} c_k = 1$. 

\section{Experimental Results}

\begin{table*}[t]
\centering
\caption{Evaluation performance of PPO and \method{} across 14 continuous-control
environments. Results report the mean return $\pm$ standard deviation for the
best-performing hyperparameter configuration of each method.}
\label{tab:performance}

\resizebox{\textwidth}{!}{%
\begin{tabular}{|l|c|c|c|c||l|c|c|c|c|}
\hline
\textbf{Environment} & \textbf{Dim.} & \textbf{PPO} & \textbf{\method} & \textbf{Improv. (\%)} &
\textbf{Environment} & \textbf{Dim.} & \textbf{PPO} & \textbf{\method} & \textbf{Improv. (\%)} \\
\hline

Ant-v5
& 8
& $4101 \pm 681$
& $\mathbf{4370 \pm 472}$
& $\mathbf{+6.6\%}$
&
Pendulum-v1
& 1
& $\mathbf{-770 \pm 445}$
& $-833 \pm 73$
& $-8.2\%$
\\

HalfCheetah-v5
& 6
& $3034 \pm 1803$
& $\mathbf{5686 \pm 2640}$
& $\mathbf{+87.4\%}$
&
MountainCarContinuous-v0
& 1
& $93 \pm 0.2$
& $\mathbf{94 \pm 0.1}$
& $\mathbf{+1.1\%}$
\\

Hopper-v5
& 3
& $2720 \pm 826$
& $\mathbf{3120 \pm 287}$
& $\mathbf{+14.7\%}$
&
LunarLanderContinuous-v3
& 2
& $269 \pm 20$
& $\mathbf{281 \pm 8}$
& $\mathbf{+4.5\%}$
\\

Walker2d-v5
& 6
& $5042 \pm 929$
& $\mathbf{5810 \pm 354}$
& $\mathbf{+15.2\%}$
&
BipedalWalker-v3
& 4
& $289 \pm 4.2$
& $\mathbf{291 \pm 4}$
& $\mathbf{+0.7\%}$
\\

Swimmer-v5
& 2
& $123 \pm 8$
& $\mathbf{197 \pm 134}$
& $\mathbf{+60.2\%}$
&
BipedalWalkerHardcore-v3
& 4
& $-28 \pm 47$
& $\mathbf{-4 \pm 59}$
& $\mathbf{+85.7\%}$
\\

Pusher-v5
& 7
& $-44.0 \pm 10$
& $\mathbf{-37 \pm 9}$
& $\mathbf{+15.9\%}$
&
HumanoidStandup-v5
& 17
& $148902 \pm 7799$
& $\mathbf{165067 \pm 32322}$
& $\mathbf{+10.9\%}$
\\

Reacher-v5
& 2
& $-3.2 \pm 0.7$
& $\mathbf{-3 \pm 0.1}$
& $\mathbf{+6.3\%}$
&
Humanoid-v5
& 17
& $2475 \pm 909$
& $\mathbf{3060 \pm 724}$
& $\mathbf{+23.6\%}$
\\

\hline
\end{tabular}%
}
\end{table*}










\begin{figure*}[t]
    \centering
    \begin{subfigure}[t]{0.32\textwidth}
        \centering
        \includegraphics[width=\linewidth]{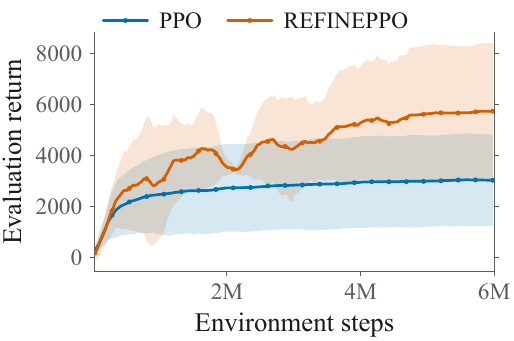}
        \caption{HalfCheetah-v5}
        \label{fig:HalfCheetah-v5}
    \end{subfigure}
    \hfill
    \begin{subfigure}[t]{0.32\textwidth}
        \centering
        \includegraphics[width=\linewidth]{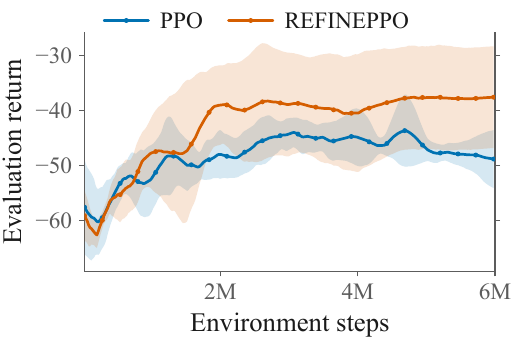}
        \caption{Pusher-v5}
        \label{fig:Pusher-v5}
    \end{subfigure}
    \hfill
    \begin{subfigure}[t]{0.32\textwidth}
        \centering
        \includegraphics[width=\linewidth]{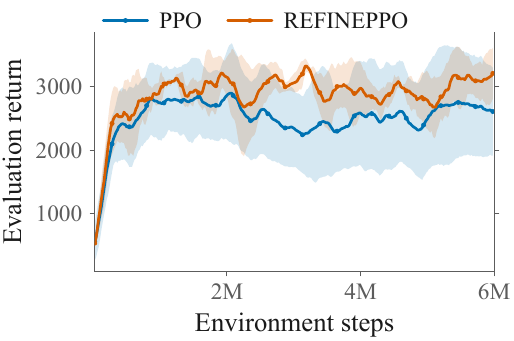}
        \caption{Hopper-v5}
        \label{fig:Hopper-v5}
    \end{subfigure}

    \vspace{0.5em}

    \begin{subfigure}[t]{0.32\textwidth}
        \centering
        \includegraphics[width=\linewidth]{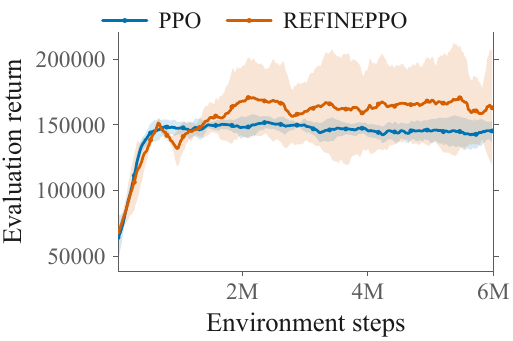}
        \caption{HumanoidStandup-v5}
        \label{fig:HumanoidStandup-v5}
    \end{subfigure}
    \hfill
    \begin{subfigure}[t]{0.32\textwidth}
        \centering
        \includegraphics[width=\linewidth]{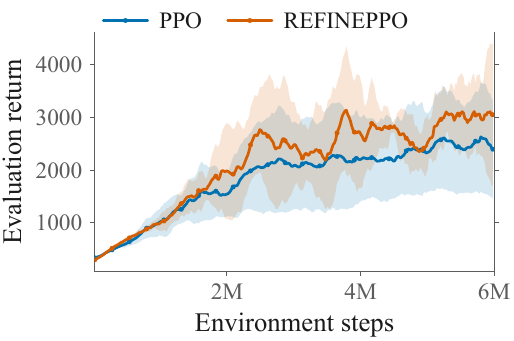}
        \caption{Humanoid-v5}
        \label{fig:Humanoid-v5}
    \end{subfigure}
    \hfill
    \begin{subfigure}[t]{0.32\textwidth}
        \centering
        \includegraphics[width=\linewidth]{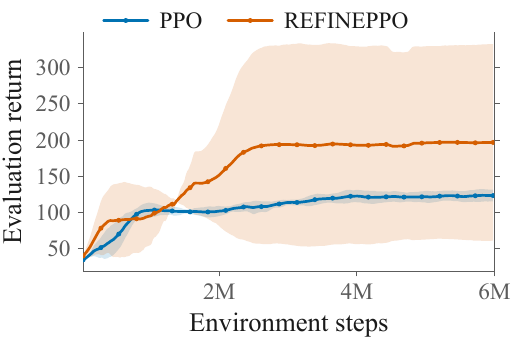}
        \caption{Swimmer-v5}
        \label{fig:Swimmer-v5}
    \end{subfigure}

    \caption{Evaluation returns throughout training for PPO and {\method} 
    across six representative environments. Solid lines show the mean 
    evaluation return across five independent seeds, and shaded regions 
    indicate variability across seeds.}
    \label{fig:plots}
\end{figure*}

We conduct experiments to evaluate the effectiveness of {{\method}} and to better understand the design choices underlying iterative action refinement. In particular, we aim to answer the following five research questions:

\begin{enumerate}
\item \textbf{RQ1: Performance.} Does {{\method}} achieve better performance than standard PPO?
\item \textbf{RQ2: Sample Efficiency.} Is {{\method}} more sample-efficient than standard PPO?
\item \textbf{RQ3: Mechanism.} What makes {{\method}} work?
\item \textbf{RQ4: Refinement Depth.} How does the refinement depth $K$ affect performance?
\item \textbf{RQ5: Refinement Schedule.} How does the iterative update schedule affect performance?
\end{enumerate}

In the following sections, we take a deep dive on these.

\vspace{-0.2cm}
\subsection{RQ1: Performance}

We first compare the final performance of {\method} against standard PPO across the evaluation environments. Table~\ref{tab:performance} reports the mean evaluation return over five independent seeds. Overall, {\method} outperforms PPO in many of the environments while remaining competitive in the others, providing strong evidence for the utility of {\method}. The results also suggest that the benefits of {\method} may become more pronounced as the dimensionality of the action space increases. One possible explanation is that higher-dimensional action spaces make  the policy to coordinate a larger number of action dimensions which {\method} is better designed to handle. 

\begin{figure*}[t]
    \centering

    \begin{subfigure}[t]{0.48\textwidth}
        \centering
        \includegraphics[width=\linewidth]
            {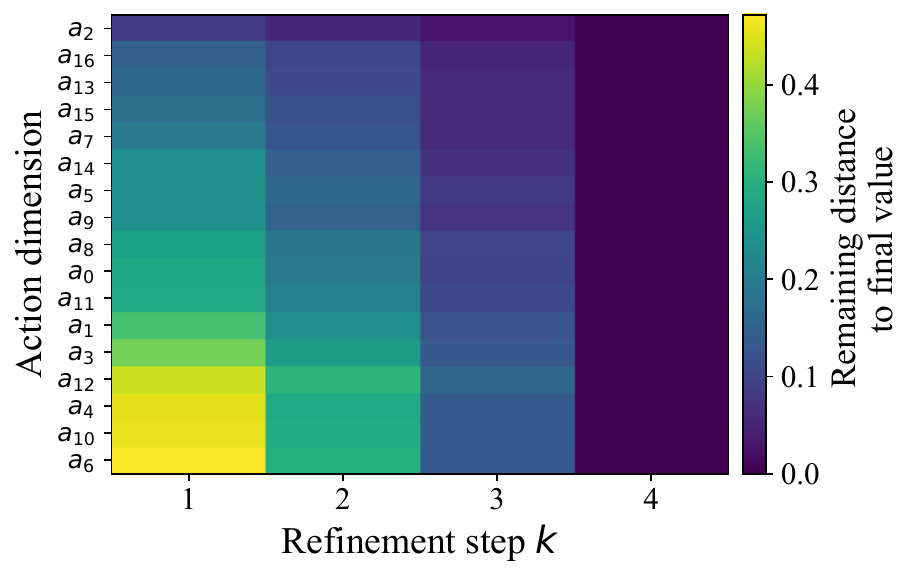}
        \caption{
        \textbf{Remaining refinement by action component.}
Each heatmap entry $(j,k)$ shows the normalized remaining distance
$e_{k,j}$ between intermediate action component $m_{k,j}$ and its
terminal value $m_{K,j}$, as defined in Eq.~\ref{eq:settling_error}.
Smaller values indicate that the component is already closer to its
terminal value. Components are ordered such that those
requiring the least subsequent refinement shown first.
}
        \label{fig:component_settling}
    \end{subfigure}
    \hfill
    \begin{subfigure}[t]{0.48\textwidth}
        \centering
        \includegraphics[width=\linewidth]
            {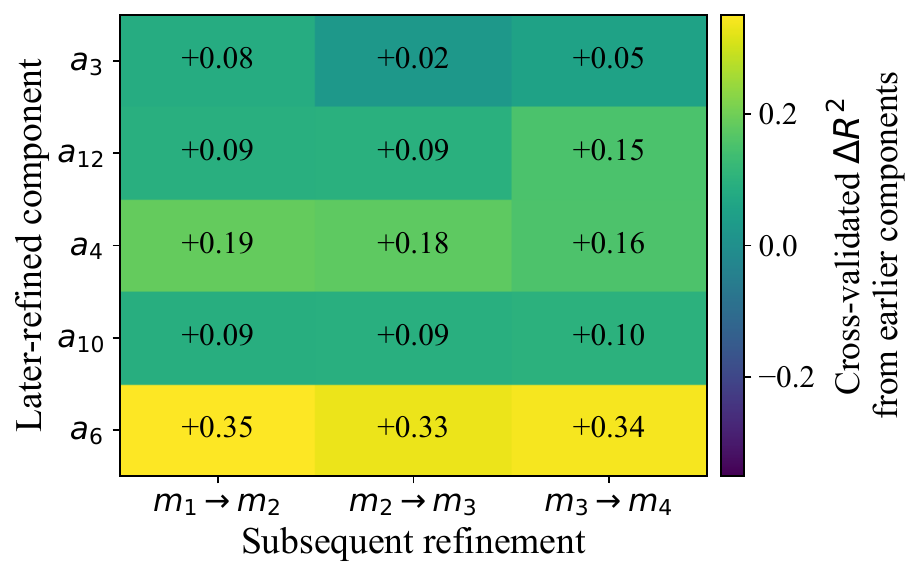}
        \caption{
        \textbf{Earlier components inform later refinements.}
        Increase in five-fold cross-validated $R^2$ when
        earlier-established components are added to a baseline that
        predicts the next correction using only the target component's
        current value. Positive $\Delta R^2$ indicates additional
        predictive information from the earlier components.
        }
        \label{fig:early_to_late}
    \end{subfigure}

    \caption{
    \textbf{Empirical evidence for progressive action construction in
    a trained {\rar} policy on Humanoid Standup.}
    \textbf{(a)} Some action components require substantially less
    subsequent refinement than others.
    \textbf{(b)} Intermediate values of earlier-established components
    provide information about subsequent corrections to components that
    continue to be refined.
    Together, the results are consistent with progressive,
    cross-component action refinement.
    }
    \label{fig:progressive_refinement}
\end{figure*}

\subsection{RQ2: Sample Efficiency}

We next examine how quickly the two methods learn as a function of environment interaction. Figure~\ref{fig:plots} shows the evaluation returns throughout training on six representative environments. {\method} learns more rapidly than PPO on all six representative environments, with the clearest improvement on Swimmer-v5, where the performance gap emerges early and persists throughout training. In contrast, Pusher-v5 and Humanoid-v5 exhibit similar learning dynamics for {\method} and PPO, with their learning curves largely tracking each other throughout training. Overall, these results suggest that {\method} can improve sample efficiency in several environments while preserving learning efficiency comparable to PPO in others.

\subsection{Empirical Evidence for the Refinement Mechanism}
\label{sec:refinement_evidence}

We next examine whether {\rar} exhibits the progressive action-construction behavior motivated with a example in Section~\ref{sec:understanding}. In particular, we ask whether some action components become established earlier in the refinement trajectory, and whether these earlier components contain information useful for predicting subsequent refinements of other components. We test this hypothesis with a {\method} trained HumanoidStandup-v5 rollout.

\paragraph{When are action components established?}
For each action dimension $j$, we measure its remaining distance from the
terminal proposal after refinement step $k$:
\begin{equation}
    e_{k,j}
    =
    \frac{
        \mathbb{E}_s[|m_{K,j}(s)-m_{k,j}(s)|]
    }{
        \mathbb{E}_s[|m_{K,j}(s)|] + \epsilon
    }
    \label{eq:settling_error}
\end{equation}
We compute this quantity over 1,000 intermediate action trajectories from a trained Humanoid Standup policy. A small $e_{k,j}$ indicates that component $j$ is already close to its terminal value at step $k$. We order action dimensions by $e_{1,j}$, so that dimensions requiring the least subsequent refinement appear first. Figure~\ref{fig:progressive_refinement}(a) shows that action components are not refined uniformly: some are already close to their terminal values after the first refinement, while others undergo substantially greater subsequent revision.

\paragraph{Do earlier components inform later refinements?} We next test whether components that become established earlier contain information about how later-refined components will subsequently change. Using the ordering from previous analysis, we select the five earliest components and the five components with the most remaining refinement. For each later refined component $j$ and transition $k\rightarrow k+1$, the prediction target is its next correction,

\begin{equation}
    \Delta_{k+1,j}=m_{k+1,j}-m_{k,j}
\end{equation}

We compare two ridge regressions. The baseline predicts $\Delta_{k+1,j}$ using only the component's current value $m_{k,j}$, while the augmented model additionally receives the current values of the five earlier-established components. We evaluate both models using five-fold cross-validation and report

\begin{equation}
    \Delta R^2
    =
    R^2_{\mathrm{augmented}}
    -
    R^2_{\mathrm{baseline}}
    \label{eq:cross_component_r2}
\end{equation}

Thus, positive $\Delta R^2$ indicates that earlier-established action components provide predictive information about the subsequent refinement of component $j$ beyond its own current value. Figure~\ref{fig:progressive_refinement}(b) reports this improvement for each later-refined component and refinement transition.

Together, the two analyses are consistent with progressive, coordinated action construction: some components require relatively little subsequent refinement, and their intermediate values contain information about how other components are refined at later steps.

\subsection{RQ3: Refinement Depth}

\begin{figure}
    \centering
    \includegraphics[width=0.75\linewidth]{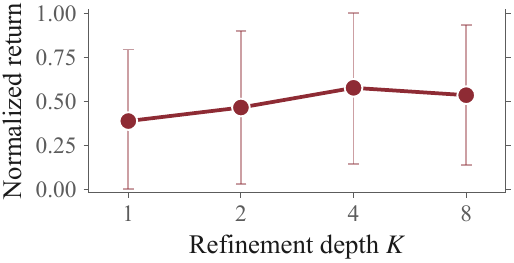}
    \caption{Refinement depth $K$ effect on normalized return across environments. Error bars indicate $\pm 1$ standard deviation.}
    \label{fig:RQ3}
\end{figure}

We next study the effect of refinement depth using $K\in\{1,2,4,8\}$. Figure~\ref{fig:RQ3} reports the min--max normalized mean return across the evaluation environments, with error bars indicating one standard deviation across environments. Performance generally improves as the refinement depth increases from $K=1$ to $K=4$, with $K=4$ achieving the highest average normalized return. Importantly, $K=1$ corresponds to the non-iterative boundary case: since $m_0=\mathbf{0}$, the policy directly maps the state and fixed initial proposal to an action. The improvement for $K>1$ therefore suggests that repeatedly refining an action proposal can improve policy performance.

The gains, however, do not increase monotonically with refinement depth, as performance slightly decreases from $K=4$ to $K=8$. This suggests that a moderate number of refinement steps is sufficient to capture most of the benefit, while additional refinement provides diminishing returns.

\subsection{RQ4: Refinement Schedule}

Finally, we investigate how the iterative update schedule influences refinement. Table~\ref{tab:RQ4} compares the Uniform ($c_k=1$), Inverse ($c_k=1/k$), and Average ($c_k=1/K$) schedules. We observe Average to be clearly providing the strongest overall performance across all the evaluated environments.

The schedules control how strongly each refinement step can modify the current action proposal and therefore induce different refinement dynamics. Uniform scaling allows every step to make a full residual correction, whereas the Inverse schedule progressively reduces the influence of later refinements. The Average schedule distributes a fixed total update scale across all $K$ steps. The differences in performance indicate that simply performing repeated refinement is not sufficient, how corrections are accumulated also matters. 

\begin{table}[t]
\centering
\caption{Effect of refinement schedule on \method{} performance across MuJoCo continuous-control environments. Results report the mean evaluation return for the Average, Uniform, and Inverse refinement schedules.}
\label{tab:RQ4}

\begin{tabular}{|l|c|c|c|}
\hline
\textbf{Environment} & \textbf{Average} & \textbf{Uniform} & \textbf{Inverse} \\
\hline
Ant-v5
& $\mathbf{4370.4}$
& $3298.8$
& $2892.9$
\\

HalfCheetah-v5
& $\mathbf{5685.9}$
& $1361.2$
& $1713.1$
\\

Hopper-v5
& $\mathbf{3120.4}$
& $2897.0$
& $2708.5$
\\

Humanoid-v5
& $\mathbf{3060.5}$
& $1345.5$
& $1024.5$
\\

HumanoidStandup-v5
& $\mathbf{165066.3}$
& $149432.5$
& $147769.1$
\\

Pusher-v5
& $\mathbf{-37.8}$
& $-58.4$
& $-44.5$
\\

Reacher-v5
& $\mathbf{-3.0}$
& $-3.5$
& $-3.3$
\\

Swimmer-v5
& $\mathbf{196.6}$
& $138.4$
& $47.8$
\\

Walker2d-v5
& $\mathbf{5809.7}$
& $4788.4$
& $4530.5$
\\

\hline
\end{tabular}
\end{table}

\section{Conclusion and Future Work}

In this work, we introduced \emph{Iterative Action Refinement} (\rar), an iterative action-construction approach that allows a continuous-control policy to progressively refine its action before execution. Integrated with PPO, {\method} achieves improved or competitive performance across 14 continuous control tasks from classic control, Bos2D and MuJoCo environments, with faster learning in several environments. Our analysis further suggests that later refinement steps use intermediate action proposals to coordinate and improve action components. Overall, these results demonstrate that iterative action construction is a promising alternative to conventional one-pass policy prediction.

Future work can extend \rar{} to other actor--critic algorithms such as SAC and TD3 and evaluate whether its benefits generalize beyond PPO. Another promising direction is adaptive refinement, where the policy determines how many refinement steps are needed for each state. Finally, evaluating \rar{} on more complex control problems, including real-world robotic manipulations and other systems, could further establish the generality and practical value of iterative action refinement. Such extensions would further clarify when iterative refinement is most beneficial and how it can be scaled to more challenging decision-making settings.

\bibliographystyle{IEEEtran}
\bibliography{bib}

\end{document}